\newcommand{\norm}[1]{\left\lVert #1 \right\rVert}
\begin{document}
%
\title{Color Homography}
\author{
\IEEEauthorblockN{Graham D. Finlayson\IEEEauthorrefmark{1}, Han Gong\IEEEauthorrefmark{1}, and Robert B. Fisher\IEEEauthorrefmark{2}}\\
\IEEEauthorblockA{\IEEEauthorrefmark{1}School of Computing Sciences, University of East Anglia, UK}
\\
\IEEEauthorblockA{\IEEEauthorrefmark{2}School of Informatics, University of Edinburgh, UK}
}


\maketitle
\IEEEpeerreviewmaketitle

\section{Introduction}
Homographies are at the center of geometric methods in computer vision and are used in geometric camera calibration, 3D reconstruction, stereo vision and image mosaicking among other tasks. In this paper, we show the surprising result that colors across a change in viewing condition (changing light color, shading and camera) are also related by a homography.

\begin{figure}
\begin{center}
  \includegraphics[width=\linewidth]{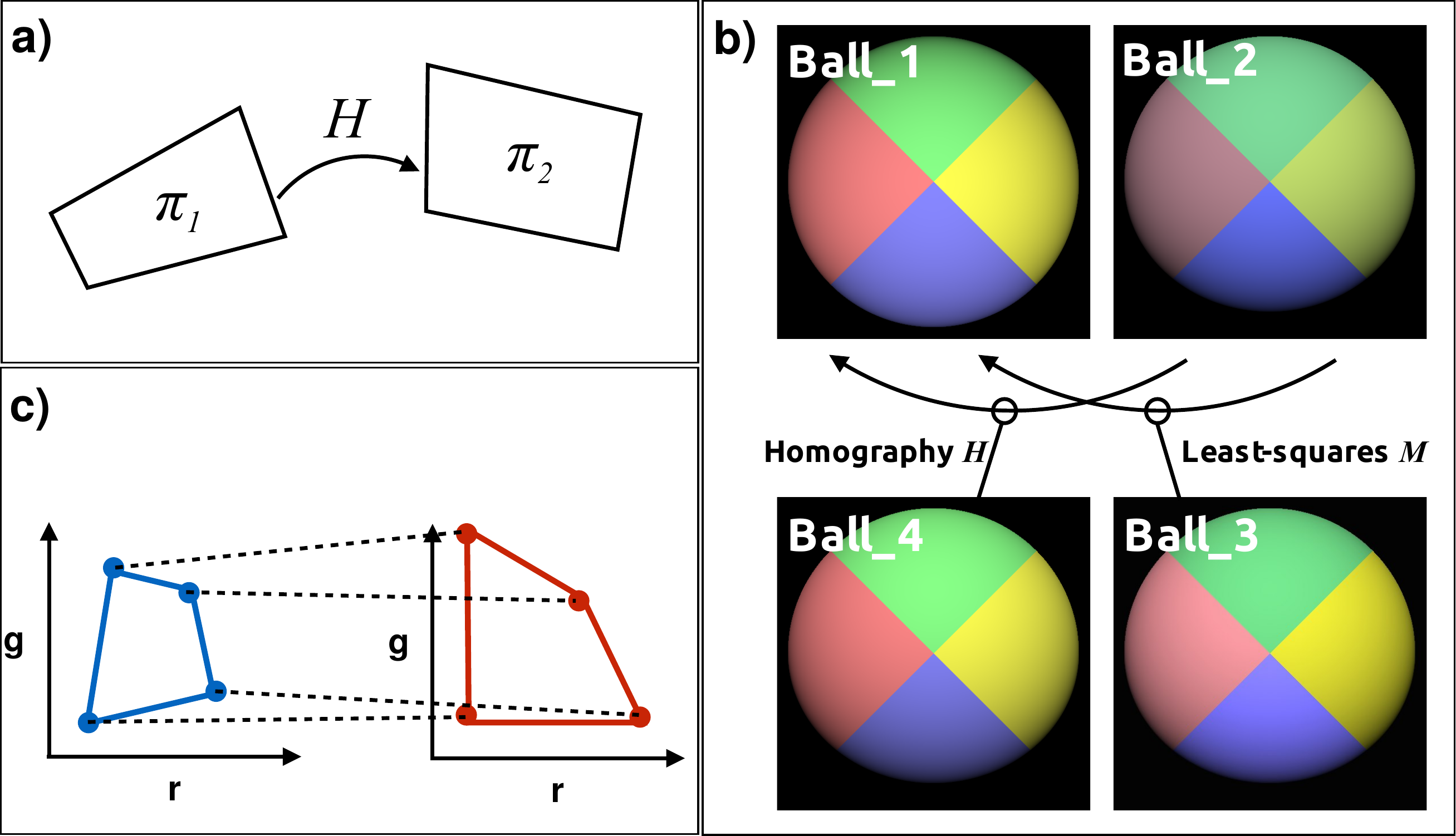}
\end{center}
\caption{Top left, panel (a), images of two planes are related by an homography. Right, panel (b), 4 images of a colored ball are shown. Ball\_1 is the reference image where the illumination color is white and placed behind the camera. Ball\_2 is the object illuminated with a blue light from above. Respectively, Ball\_3 and Ball\_4 are the least-squares mapping and the homography match from Ball\_2 to Ball\_1, Bottom right, panel (c), the chromaticities from Ball\_2 matched to corresponding chromaticities in Ball\_1.}
\label{fig:balls}
\end{figure}

In geometry computer vision, an homography relates two planes. In Figure~\ref{fig:balls}a, $\pi_1$ and $\pi_2$ might denote the same 3D plane viewed in two different images related by the homography $H$.
In color, an homography relates two {\it photometric} views. In Figure~\ref{fig:balls}b, Ball\_1 is the image of the side-view of a 4-color ball where the ball is lit from behind the camera with a white light. The same ball is lit from above with a bluish light, image Ball\_2. The images are in pixel-wise correspondence. We first color correct Ball\_2 to match Ball\_1 by using linear regression. Its results is shown in image Ball\_3 where the colors are incorrectly mapped and the red color segment looks particularly wrong. Ball\_4 is a better example that Ball\_2 is color corrected by using the linear transform by color homography.
In this paper, we propose that to map one {\it photometric} view to another we must map the colors correctly independent of shading. Since shading only affects the brightness, or magnitude, of the RGB vectors, it is possible to find the $3\times3$ map which maps the color {\it rays} (the RGBs with arbitrary scalings) in one photometric view to corresponding rays in another. We note that this ''ray matching'' is precisely the circumstance in geometric mapping for co-planar and corresponding points in two images~\cite{Hartley2004}. 
An RGB without shading can be encoded as the rg-chromaticity coordinate. In Figure~\ref{fig:balls}c, the 4 reflectances from the ball correspond to 4 points in an rg-chromaticity diagram which define the quadrilaterals shown in the left and right of the panel (for respectively for the images Ball\_2 and Ball\_1). The mapping between the two chromaticity diagrams is an homography. We show that the color calibration problem -- mapping device RGBs recorded for a color chart to corresponding XYZs -- can be formulated as a homography problem. Because in real shading varies across the chart solving for the homography can deliver a 50\% improvement in color correction compared with direct linear least-squares regression.

\section{Background}
\label{sec:bg}

For the geometric planar homography problem, we write:
\begin{equation}
\left [
\begin{array}{c}
\alpha x\\
\alpha y\\
\alpha
\end{array}
\right ]^\intercal
=
\left [
\begin{array}{c}
x'\\
y'\\
1
\end{array}
\right ]^\intercal
\left [
\begin{array}{ccc}
h_{11} & h_{12} & h_{13} \\
h_{21} & h_{22} & h_{23} \\
h_{31} & h_{32} & h_{33} 
\end{array}
\right ]
\;\triangleq\;
\underline{x}=H(\underline{x}')
\label{eq:homography}
\end{equation}
In Equation~\ref{eq:homography}, $(x,y)$ and $(x',y')$ denote corresponding image points -- the same physical feature -- in two images. In homogeneous coordinates the vector [$a\; b \;c]^\intercal$ maps to the coordinates $[a/c \;b/c]^\intercal$ and so, in Equation~\ref{eq:homography}, the scalar $\alpha$ cancels to form the image coordinate $(x,y)$. For all pairs of corresponding points $(x,y)$ and $(x',y')$ that lie on the same plane in 3 dimensional space, Equation~\ref{eq:homography} exactly characterises the relationship between their images~\cite{Hartley2004}.
To solve for an homography (e.g. for image mosaicking), we need to find at least 4 corresponding points in a pair of images.


\section{Color Homography}
\label{sec:main}

Let us map an RGB $\underline{\rho}$ to a corresponding RGI (red-green-intensity) \underline{c} using a full-rank $3\times 3$ matrix $C$:
\begin{equation}
\begin{array}{c}
\underline{\rho}^\intercal C=\underline{c}^\intercal\\
\;\\
\left [
\begin{array}{c}
R\\
G\\
B
\end{array}
\right ]^\intercal
\left [
\begin{array}{ccc}
1 & 0 & 1\\
0 & 1 & 1\\
0 & 0 & 1
\end{array}
\right ]
=
\left [ 
\begin{array}{c}
R\\
G\\
R+G+B
\end{array}
\right ]^\intercal

\end{array}
\label{eq:chromaticity_conversion}
\end{equation}
The $r$ and $g$ chromaticity coordinates are written as ${r=R/(R+G+B)} \;,\;{g=G/(R+G+B)}$ interpreting the right-hand-side of Equation~\ref{eq:chromaticity_conversion} as a homogeneous coordinate we see that $
\underline{c}\propto \left [
\begin{array}{ccc} r&g&1
\end{array}
\right]^\intercal
$. In the following proof it is useful to represent 2-d chromaticities by their corresponding 3-d homogeneous coordinates.

\newtheorem{thm}{Theorem}
\begin{thm}[Colour Homography]
Chromaticities across a change in capture condition (light color, shading and imaging device) are a homography apart.
\end{thm}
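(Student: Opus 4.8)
The plan is to model a change in capture condition as a sequence of transformations on the RGB vectors and then show each transformation, once expressed in the homogeneous chromaticity coordinates $\underline{c}\propto[r\;g\;1]^\intercal$, reduces to a $3\times 3$ linear map -- i.e. a homography. Physically, three effects distinguish two photometric views: a change of light color, which scales each color channel independently and so acts as a diagonal $3\times 3$ matrix on $\underline{\rho}$; a change of imaging device (or the camera's spectral sensitivities), which is standardly modeled as a full-rank linear $3\times 3$ map on $\underline{\rho}$; and a change in shading, which multiplies the whole RGB by a positive per-pixel scalar. First I would compose the first two effects into a single full-rank $3\times 3$ matrix $M$ acting on the RGB, writing $\underline{\rho}_2^\intercal = \alpha\,\underline{\rho}_1^\intercal M$, where the scalar $\alpha>0$ absorbs the shading.

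Next I would transport this relation into RGI/chromaticity space using the conversion matrix $C$ of Equation~\ref{eq:chromaticity_conversion}. Since $C$ is full-rank, I can write $\underline{c}^\intercal = \underline{\rho}^\intercal C$ and invert to get $\underline{\rho}^\intercal = \underline{c}^\intercal C^{-1}$. Substituting into the RGB relation gives
\begin{equation}
\underline{c}_2^\intercal C^{-1} = \alpha\,\underline{c}_1^\intercal C^{-1} M,
\end{equation}
and right-multiplying by $C$ yields $\underline{c}_2^\intercal = \alpha\,\underline{c}_1^\intercal (C^{-1} M C)$. Defining $H \triangleq C^{-1} M C$, which is full-rank because $C$ and $M$ are, we obtain $\underline{c}_2^\intercal = \alpha\,\underline{c}_1^\intercal H$. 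This has exactly the form of Equation~\ref{eq:homography}: a full-rank $3\times 3$ matrix acting on homogeneous $3$-vectors, with the positive scalar $\alpha$ free to be chosen.

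The key conceptual step -- and the one I expect to be the main obstacle to state cleanly -- is the role of the per-pixel shading scalar $\alpha$. In the geometric homography, the scalar that cancels arises purely from the projective (homogeneous) equivalence and is determined by the map and the point. Here the analogous scalar has two sources: the projective normalization implicit in passing from $\underline{c}$ to $[r\;g\;1]^\intercal$, and the genuine physical shading that varies from pixel to pixel. I would argue that because chromaticity is by construction invariant to the overall magnitude of $\underline{c}$, any positive scalar multiplying $\underline{c}_2$ is absorbed when we renormalize the homogeneous coordinate to its $[r\;g\;1]^\intercal$ representative; hence the unknown shading $\alpha$ never needs to be recovered and simply cancels, exactly as in the geometric case. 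Finally I would close the argument by reading off that the induced map on the two chromaticity planes is therefore a planar homography in the precise sense of Equation~\ref{eq:homography}, completing the proof.
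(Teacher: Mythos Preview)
Your proposal is correct and follows essentially the same route as the paper: assume RGBs across the two capture conditions are related by a full-rank linear map $M$ together with a per-pixel scalar, conjugate by $C$ to obtain $H=C^{-1}MC$ acting on RGI vectors, and then observe that the scalar is absorbed by the homogeneous normalization so that chromaticities are related by the projective map $H$. Your write-up is somewhat more explicit than the paper's (you spell out the decomposition of $M$ into illuminant and device factors and derive $H=C^{-1}MC$ by substitution rather than stating it), but the argument and its key ingredients are the same.
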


\begin{proof}
    First we assume that across a change in illumination or a change in device where the shading is the same (for the Mondrian-world) the corresponding RGBs are related by a linear transform M. Clearly, $H=C^{-1}MC$ maps colors in RGI form between illuminants. Due to different shading, the RGI triple under a second light is represented as $\underline{c}'^\intercal=\alpha'\underline{c}^\intercal H$, where $\alpha'$ denotes the unknown scaling. Without loss of generality let us interpret \underline{c} as a homogeneous coordinate i.e. assume its third component is 1. Then, $[r'\;g']^\intercal=H([r\;g]^\intercal)$ (chromaticity coordinates are a homography $H()$ apart).
\end{proof}


\subsection{Solving color Homography by Alternating Least Squares}
\label{sec:main3d}

Suppose $A$ and $B$ denote respectively $n\times 3$ matrices of $n$ corresponding pixels with respect to two images of the same scene where the illumination changes (and, also possibly the camera properties). The color change is modeled as a linear transform. And, due to the relative positions of light and surfaces, the per-pixel shading intensities are usually different. Assuming the Lambertian image formation, $ DAH\approx B$ where $D$ is an $n\times n$ diagonal matrix of shading factors and $H$ is a $3\times 3$ color correction matrix. We solve this equation by using Alternating Least-Squares (ALS) described in Algorithm~\ref{alg:als}.
\begin{algorithm}
\SetAlgoLined
    $i=0$, $A^0=A$\;
    \Repeat{$\norm{A^{i}-A^{i-1}}<\epsilon$}{
    $i=i+1$\;
    $\min_{D^i} \norm{D^iA^{i-1}-B}$\;
    $\min_{H^i} \norm{D^iA^{i-1}H^i-B}$\;
    $A^{i}=D^iAH^i$\;
    }
\caption{Homography from alternating least-square}
\label{alg:als}
\end{algorithm}
The effect of the individual $H^i$ and $D^i$ can be combined into a single matrix $D=\prod_{i} D^i$ and $H=\prod_{i} H_i$.

Alternately, we can solve for $H$ in direct analogy to computer vision. Specifically, we randomly select sets of 4 corresponding points and find the best $H$ and then test the ‘goodness’ of fit for the $H$ discovered on the rest of the dataset. This RANSAC method (Random sampling consensus) has the advantage that we can robustly minimize any chosen error including CIE Lab.

Finally, we note that existing works have already developed intensity invariant color matching ~\cite{ALS,FuntIntensity}. Though, neither has reported the homography formalism or exploded the enhanced flexibility it offers.
\section{Color Correction}
\label{sec:exp}

\begin{figure}
\begin{center}
  \includegraphics[width=\linewidth]{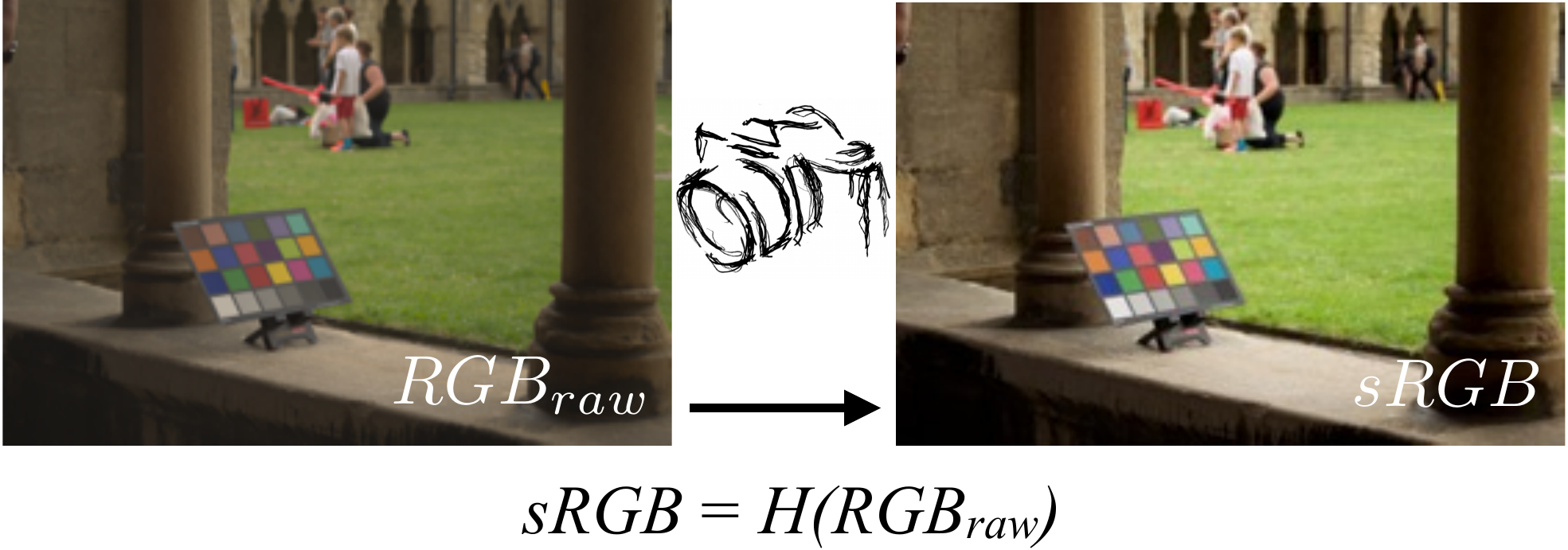}
\end{center}
\caption{Color correction (mapping raw to display sRGB) is an homography problem. This figure is also a color chart example which was used for our color correction evaluation.}
\label{fig:chart}
\end{figure}
In color correction - mapping raw RGBs to a display color space - the target RGBs, are known to vary in intensity. In Figure~\ref{fig:chart}, we show the picture of an image in the raw RGB space of a camera and the corresponding reproduction when the colors are corrected for display. Very professional imaging scientist might take a picture of a color checker and a second picture of a uniform gray target with same size in the same location. By dividing the RGB image of the checker by the image of the gray-target the shading is removed and then the shading corrected RGBs can be mapped to known reference display color coordinates using simple least-squares. However, this two-step approach is inconvient and in some cases cannot be done at all (e.g. in an on-going surveillance situation). In our evaluation, this two-step approach is applied to obtain ground truth RGB inputs with corrected shading. For each method, we obtain a $3\times 3$ color correction matrix from the non-shading-corrected RGB inputs to a reference target and then apply this matrix to the shading corrected RGB inputs. This experimental methodology is described in detail in \cite{FuntIntensity}.

The color correction error is measured using the CIE Lab metric~\cite{WYSZECKI82}. All 3 of our evaluation images were taken around a local historical site that is popular with amateur and professional photographers alike (e.g. Figure~\ref{fig:chart}).
The mean, median, 95\% quantile and max $\Delta$E errors are reported in Table~\ref{tab:calibration_error}. It is clear that homography-based color correction supports a significantly improved color correction performance (all error measures are $\approx$ 40\% improved).
\begin{table}
\begin{center}
\begin{tabular}{|l|c|c|c|c|}
\hline
Method & mean & median & 95\% & max \\
\hline
Least-squares & 3.70 & 3.30 & 7.73  & 8.39 \\
Homography & 2.34 & 2.09 & 5.02 & 5.43 \\
\hline
\end{tabular}
\end{center}
\caption{CIE $\Delta$E error comparison}
\label{tab:calibration_error}
\end{table}


\section{Conclusion}
\label{sec:con}
In this paper, we shown the surprising result that colors across a change in viewing condition (changing light color, shading and camera) are related by a homography. Our homography color correction application delivers improved color fidelity compared with the linear least-squares.
{\small
\bibliographystyle{IEEEtran}
\bibliography{biblio.bib}
}

\end{document}